\newcommand{\todo}[1]{}
\newcommand{\MEC}[1]{}
\newcommand{\U}{\mathcal{U}}
\newcommand{\R}{\mathbb{R}}
\newcommand{\setF}{\mathcal{F}}
\newcommand{\conv}{{\rm conv}}
\newtheorem{theorem}{Theorem}
\newtheorem{lemma}{Lemma}
\newtheorem{remark}{Remark}
\begin{document}
%
\title{Mixed Strategy for \\ Constrained Stochastic Optimal Control}
%
%
%

\author{Masahiro Ono,~Mahmoud El Chamie, Marco Pavone,~and~Beh\c{c}et A\c{c}ikme\c{s}e
\thanks{Masahiro Ono ({\tt\small ono@jpl.nasa.gov}) is with Jet Propulsion Laboratory, California Institute of Technology, 4800 Oak Grove Drive, Pasadena, CA, USA. Copyright 2016 California Institute of Technology. Government sponsorship acknowledged.}
\thanks{Mahmoud El Chamie ({\tt\small melchami@uw.edu}) and Beh\c{c}et A\c{c}ikme\c{s}e ({\tt\small  behcet@uw.edu}) are with University of Washington, Seattle, WA  98195.}
\thanks{Marco Pavone ({\tt\small pavone@stanford.edu}) is with the Department of Aeronautics and Astronautics, Stanford University, Stanford, CA 94305}
}

%
%

\markboth{~}%
{Shell \MakeLowercase{\textit{et al.}}: Bare Demo of IEEEtran.cls for Journals}
%



\maketitle

\begin{abstract}
Choosing control inputs randomly can result in a reduced expected cost in optimal control problems with stochastic constraints, such as  stochastic model predictive control (SMPC). 
We consider a controller with initial randomization, meaning that the controller randomly chooses from $K$+1 control sequences at the beginning (called $K$-randimization).
It is known that, for a finite-state, finite-action Markov Decision Process (MDP) with $K$ constraints, $K$-randimization is sufficient to achieve the minimum cost.
We found that the same result holds for stochastic optimal control problems with continuous state and action spaces.
Furthermore, we show the randomization of control input can result in reduced cost when the optimization problem is nonconvex, and the cost reduction is equal to the duality gap.
We then provide the necessary and sufficient conditions for the optimality of a randomized solution, and develop an efficient solution method based on dual optimization. 
Furthermore, in a special case with $K=1$ such as a joint chance-constrained problem, the dual optimization can be solved even more efficiently by root finding.
Finally, we test the theories and demonstrate the solution method on multiple practical problems ranging from path planning to the planning of entry, descent, and landing (EDL) for future Mars missions.  
\end{abstract}


%
\IEEEpeerreviewmaketitle

\section{Introduction}

\IEEEPARstart{T}{he}  main finding of this paper is that, in optimal control problems with stochastic constraints, choosing control inputs randomly can result in a less expected cost than deterministically optimizing them.
To communicate the idea, consider the following toy problem illustrated in Figure \ref{fig:toy}. The goal is to plan a path to go to the goal with a minimum expected cost while limiting the chance of failure to 1 \%.
There are two path options, A and B. A has the expected cost of 20 and the chance of failure is 0.5\%; B has the expected cost of 10 and the chance of failure is 1.5\%. Choosing B violates the chance constraint, hence the optimal solution is A if only deterministic choice is allowed. However, we can create a mixed solution by flipping a coin to randomly choose between A and B. Assuming the probability of head and tail is 0.5, the resulting mixed solution has the expected cost of 15 and a 1\% chance of failure, which satisfies the chance constraint. The expected cost of the mixed solution is less than that of A.
In the example above, A and B are represented by two different sequences of control inputs. When a mixed strategy is employed, the system flips a coin once at the beginning, and choose a sequence according to the result of the coin flip. Once a sequence is selected, the system sticks to the sequence until the end.

\begin{figure}[tb]
  \begin{center}
    \includegraphics[width=0.8\columnwidth]{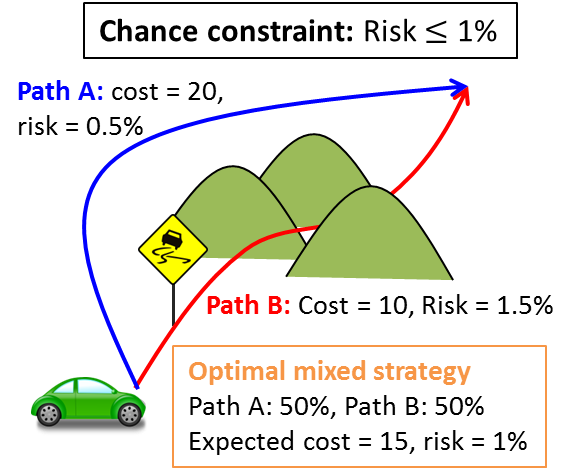}
  \end{center}
  \caption{A toy example illustrating the concept of mixed-strategy SMPC. The driver chooses between Paths A and B by a coin flip with equal probability. If the  upper bound on risk is 1\%, the mixed strategy satisfies the chance constraint and the expected cost is less than the optimal deterministic choice, Path A. } 
  \label{fig:toy}
    \begin{center}
    \includegraphics[width=0.8\columnwidth]{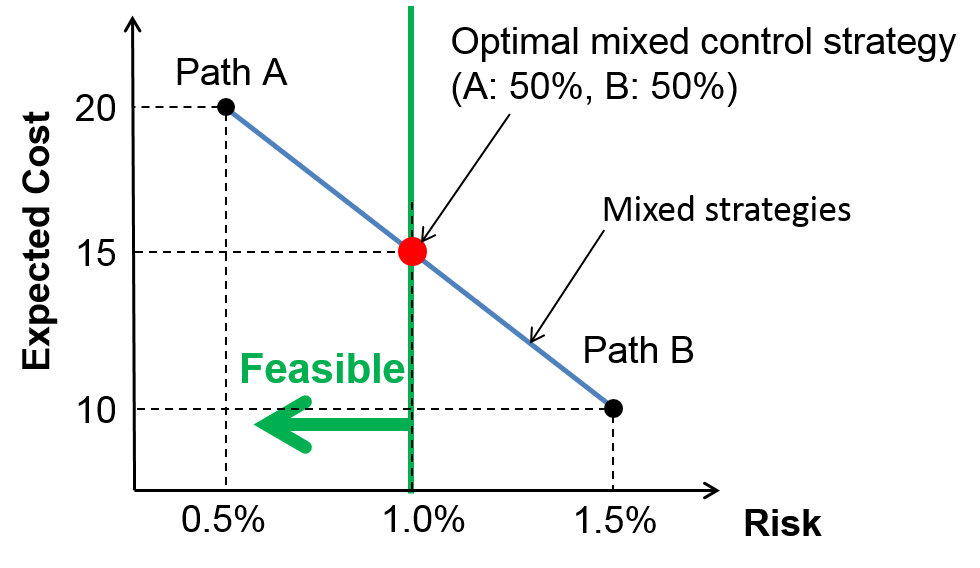}
  \end{center}
  \caption{Solutions of Figure \ref{fig:toy} in cost-risk space. Mixed strategies convexify the solution set.} 
  \label{fig:toy2}
\end{figure}

Mixed strategy is essentially a convexification. As shown in Figure \ref{fig:toy2}, in the cost-risk space, the set of the pure strategies is a nonconvex set consisting of two points. 
The solution set of mixed strategies is a line segment between A and B, which is the convex hull of the pure solution set.
In general, as shown in Figure \ref{fig:duality}, when the original problem is nonconvex, introducing mixed strategy extends the solution set, which could improve the cost of the optimal solution. 
The improvement is equal to the duality gap.
When there is no duality gap, the optimal mixed solution is equivalent to the optimal pure solution (i.e., choosing the optimal pure solution with the probability of one.)
Therefore, the optimal solution to the mixed strategy problem is always as good as the optimal solution to the original problem.

In general, unlike the illustrative example above, a stochastic optimal control problem has infinitely many solutions. 
An optimal mixed-strategy controller first computes a finite number of control sequences, them randomly chooses one from them.
The formal problem definition is given as an extension to a standard finite-horizon, constrained stochastic optimal control problem, where $K$+1 control sequences \textit{and} the probability to choose the control sequences are optimized (called $K$-randomization).
The degree of randomization, $K$, is pre-specified.
The controller chooses one control sequence at the beginning (hence called \textit{initial randomization}).
Then the chosen control sequence is executed till the end.

The contribution of this paper is three-fold.
First, we provide rigorous characterizations of mixed-strategy, constrained stochastic optimal control, which are summarized in Theorems 1-3 in Section \ref{sec:method}.
Theorem 1 in Section \ref{sec:degree_of_randomization} provides the sufficient degree of randomization. 
Specifically, for a problem with $K$ constraints, $K$-randomization is sufficient for optimality.
Theorem 2 in Section \ref{sec:cost_reduction} states that the attainable cost reduction by mixed strategy is equivalent to the duality gap in the original (non-randomized) problem. This is because the original problem is convexified by mixed strategy. In other words, mixed strategy can improve the solution only if the original problem is nonconvex.
Theorem 3 in Section \ref{sec:solution} provides the necessary and sufficient conditions for optimality, which is built upon the consists of a subset of KKT conditions and an additional constraint requiring that all the candidate control sequences are the minimizers of the Lagrangian function with the optimal dual variable.

The second contribution of this paper is to develop an efficient solution approach to the mixed-strategy, constrained stochastic optimal control problem.
A naive solution approach requires co-optimization of $K+1$ control sequences as well as the probability distribution, which is significantly more complex than the original optimization problem without randomization.
A key observation is that, since the mixed-strategy optimization problem is the convexification of the original problem, their dual optimal solutions are the same.
This observation leads to our general solution method that solves the dual of the \textit{original} optimization problem (without randomization).
The primal optimal solution for the mixed-strategy problem can be recovered from the dual optimal solution.
Furthermore, in a special case with $K=1$, we provide an even more efficient solution approach that solves the dual problem with root finding.
For a more specific case where the proposed approach is applied to a linear SMPC with nonconvex constraints, we present an efficient and approximate solution approach where the minimization of Lagrangian function is approximated by a MILP through piece-wise linearization of the cumulative distribution function of the state uncertainty.

The third contribution is to validate the theories and demonstrate the solution method in various practical scenarios. 
We first show an example of SMPC-based path planning with obstacles and a joint chance constraint, and show that mixed strategy indeed improves the expected cost.
We also show that the proposed dual solution approach is also applicable to a finite-state optimal control problem. 
Two examples are presented in this domain: path planning with obstacles, and the planning of entry, descent, and landing (EDL) for future Mars rover/lander missions.

\subsection{Related Work}

In game theory, mixed strategy is usually discussed in a context of simultaneous adversary game. 
A classical example is paper-rock-scissors, where the sole Nash equilibrium is to uniformly randomize the strategy for both players.
An underlying assumption here is that both players optimize their strategy given the strategy of the other player. 
The SMPC problem is different in that one player (controller) optimizes her strategy given the strategy of the other (the nature) but not \textit{vice versa}.
In other words, one player is cognitive while the other is blind.
In paper-rick-scissors with cognitive and blind players, the cognitive player cannot be better off by employing a mixed strategy.   
Therefore, the fact that the cognitive player \textit{can} be better off with a mixed strategy in optimal control is seemingly contradictory. 
This is because, unlike players in classical game-theoretic settings, the controller solves a constrained optimization. 
Intuitively, the constraint and the objective work adversarially, like two players within a controller.

It is known that mixed strategy can improve the solution of constrained Markov decision process (MDP). 
Major results on this subject, including randomization, are summarized in \cite{Altman1999_ConstrainedMDP}.
A stochastic optimal control problem can be viewed as an MDP with \textit{continuous} state and control spaces.
 
 \MEC{ 2 paragraphs for the comparison of related work on MDP.}
 The majority of the existing methods for solving constrained MDPs use the idea of \emph{convex-analytical} (CA) approach \cite{Borkar2002}.
The CA approach optimizes the performance metric of the MDP by reducing the problem to an optimization of a linear function over the set of occupancy measures; hence, formulated as a linear program.  The CA has shown to be useful for solving MDPs with multiple criteria and constraints when the constraints have the same additive structure as the performance measure (i.e., linear function over the set of occupancy measures). This paper on the other hand, do not limit the constraints/performance metric to any structure, thus it can be used for solving a more general class of constrained MDPs. 

The scope of our work considers the performance of a mixture of nonrandomized policies (i.e., only considers initial randomization).  It is worth mentioning that previous work has studied whether it is possible to \emph{split} a randomized policy (i.e., randomization of feedback control law) into a mixture of deterministic policies while preserving performance \cite{Feinberg:2012}. It is shown that any Markov policy is a mixture of nonrandomized Markov policies \cite[Theorem 5.2]{Feinberg:1996}. This inclusion suggests that this work also generalizes to randomization of control policy.

Note that the proposed method is fundamentally different from randomized SMPC methods such as scenario-based MPC \cite{BernardiniBemporadCDC09,CalafioreScenarioReview}. In scenario-based MPC, the optimal control inputs are deterministic but the solution method to obtain them is randomized. In contrast, in this work, the optimal control inputs are randomized  but the solution method is deterministic.
Randomized control input was considered in a control theoretical context by \cite{Behcet_MCMC,Behcet_MCMC_safety}.
The problem considered in these studies is the probabilistic coordination of swarms of autonomous agents using a Markov chain controller. 
Here randomized control is used for a different purpose than in our work. 
In the Markov chain control randomized control inputs are used to achieve the desired spacial density distribution of the swarm agents without assuming inter-agent communication. In contrast, in our work, randomized control inputs are used to achieve less expected cost.

\todo{mixed strategy chance constrained programming?}

\todo{interpretation - multi robot, schodinger cat}

\section{Method}
\label{sec:method}

The following is the rough sketch of the proposed solution process.
\begin{enumerate}
\item Prove that the original problem and the mixed strategy problem share the same dual optimal solution
\item Compute the dual optimal solution by solving the dual of the original problem
\item Recover the primal optimal solution of the mixed strategy problem from the dual optimal solution
\end{enumerate}
The solution method is explained in detail in the following subsections.

\subsection{Problem Formulation}

We first formulate a pure-strategy problem that does not involve randomization. 
Consider a discrete-time optimal control problem with $K$ stochastic constraints, where the objective and constraints are on the expected cost over a finite horizon, $\{1 \cdots T\}$.
Let $u := \{u_1, u_2, \cdots, u_T\} \in \mathcal{U}^T$ be the control sequence, $x := \{x_1, x_2, \cdots, x_{T}\} \in \mathcal{X}^T$ be the state sequence, where $\mathcal{U}$ and $\mathcal{X}$ are the feasible control set and the state space, respectively. 
We denote by $w := \{w_1, w_2, \cdots, w_{T}\}$ the sequence of exogenous disturbance, which follows a known probability distribution. 
The system has a dynamics represented by $ g(x, u, w) = 0$. 
We consider a close-loop control, where the feedback law at $k$-th time step is given by a deterministic control policy $\mu_k: \mathcal{X} \mapsto \mathcal{U}$. Let $\mathcal{M}$ be the set of deterministic control policy that we consider. Hence, we seek for an optimal sequence of deterministic control policy, $\mu := \{\mu_1,\mu_2, \cdots, \mu_T\} \in \mathcal{M}^T$.
We define $K+1$ cost functions, $f_i : \mathcal{X}^T \times \mathcal{U}^T \rightarrow \mathbb{R}$, for $i = 0 \cdots K$. 
With a slight abuse of notation, we denote the close-loop cost and dynamics by $f_i(x, \mu)$ and $ g(x, \mu, w) = 0$, respectively.

The problem is to minimize the expectation of $f_0$ while constraining the expectations of  $f_1 \cdots f_K$ below $V_1 \cdots V_k$.
The minimized expected cost is denoted by $c_{\rm P}^\star$.

\vspace*{2mm}
\noindent {\bf PSOC (Pure-strategy Stochastic Optimal Control}
\begin{align}
c_{\rm P}^\star  = \min_{\substack{\mu \in \mathcal{M}^T \\ g(x, \mu, w) = 0}} & \ \mathbb{E}\left[ f_0(x, \mu) \right] \label{eq:objPure} \\
s.t. & \ \mathbb{E}\left[ f_i(x, \mu) \right] \le V_i, \quad i = 1 \cdots K. \label{eq:ccPure}
\end{align}
A notable example of constraints in the form of (\ref{eq:ccPure}) is a chance constraint. 
Let $\mathcal{X}_{\rm F} \subset \mathcal{X}$ be the set of feasible states.
A chance constraint imposes a bound on the probability that the state stays within $\mathcal{X}_{\rm F}$ over the planning horizon:
\begin{equation}
\Pr\left[ x \in \mathcal{X}_{\rm F}^T \right] \ge 1-V. \label{eq:cc}
\end{equation}
This constraint is posed in the form of (\ref{eq:ccPure}) by 
\[
f_i(x, \mu) := \left\{ 
\begin{array}{l}
0 \qquad (x \in \mathcal{X}_{\rm F}^T) \\ 
1 \qquad ({\rm Otherwise})
\end{array}
\right. .
\]
\todo{Any other example worth mentioning here?}

The problem is reduced to an open-loop control problem (i.e., optimization of control sequence) if $\mathcal{M}$ is limited to constant functions. Typical feedback MPCs limits $\mathcal{M}$ to linear feedback laws, $u = Kx$, where $K$ is optimized. MDP usually considers all the possible mappings with finite $\mathcal{X}$ and $\mathcal{U}$. 
Discussion in Section 2 poses no assumptions on $\mathcal{X}$, $\mathcal{U}$, and  $\mathcal{M}$ (except for standard assumptions such as \todo{what?}), hence it can be applied to a variety of problems ranging from stochastic MPC with continuous state and control to MDP with finite state and control. Then, Sections 3 and 4 discusses more specialized cases.

We next define the mixed-strategy problem, in which one of $N+1$ policy sequences is chosen at the beginning. 
Following the convention in constrained MDP, we call such a randomization the $N$-\textit{randomization} in this paper. 
Consider mixing $N+1$ policy sequences, $\mu^1 \cdots \mu^{N+1}$. Let $0 \le p^i \le 1$ be the probability that $\mu^i$ is chosen. 
The mixed strategy problem is to optimize $\mu^1 \cdots \mu^{N+1}$ as well as $p^1 \cdots p^{N+1}$ to minimize the expected cost.
The minimized expected cost is denoted by $c_{\rm M}^{\star N}$.

\vspace*{2mm}
\noindent {\bf MSOC$^N$ (Mixed-strategy Stochastic Optimal Control)}
\begin{align}
c_{\rm M}^{\star N} = \min_{\substack{\mu^1 \cdots \mu^{N+1} \in \mathcal{M}^T \\ \sum_{j=1}^N p^j = 1,\  p^j\ge0 \\ g(x, \mu, w) = 0}} & \ \sum_{j = 1}^{N+1} p^j \mathbb{E}\left[ f_0(x, \mu^j) \right] \label{eq:objMix} \\
s.t. & \ \sum_{j = 1}^{N+1} p^j \mathbb{E}\left[ f_i(x, \mu^j) \right] \le V_i \label{eq:ccMix} \\
& \qquad i = 1 \cdots K \nonumber
\end{align}

\subsection{Sufficient Degree of Randomization}
\label{sec:degree_of_randomization}
Before solving {MSOC$^N$}, we have to determine $N$.
In other words, we have to know what is the sufficient number of control sequence to be mixed.
We will show in this subsection that $K$-randomization ($N=K$) is sufficient in order to minimize $c_{\rm M}^{\star N}$.
In other words, for a problem with $K$ stochastic constraints, at most $K+1$ control sequences need to be mixed to form an optimal solution.
The formal statement is given in Theorem 1 later in this subsection. 
But we first need a few preparations.

Let $c = (c_0, c_1, \cdots c_K)$ where $c_i$ is the $i$-th cost value:
\[
c_i = \mathbb{E}\left[ f_i(x, \mu) \right].
\]

We denote by $\setF \subset \mathbb{R}^{K+1}$ the feasible set of the costs of the original problem, that is,
\begin{align}
\setF := \left\{ c \ | \ \mu \in \mathcal{M} \wedge g(x, \mu, w) = 0 \right\}.\label{eq:setF}
\end{align}
We assume that $\setF$ is a closed set, which typically holds when $\U$ and $\mathcal{X}$ are closed sets.
With $\setF$, {PSOC} (\ref{eq:objPure}), (\ref{eq:ccPure}) can be written in a simpler form as follows

\vspace*{2mm}
\noindent {\bf PSOC':}
\begin{align}
c_{\rm P}^\star  = \min_{c \in \setF} & \quad c_0 \label{eq:obj2} \\
s.t. & \quad  c_i \le V_i, \quad i = 1 \cdots K. \label{eq:cc2}
\end{align}

Likewise, {MSOC$^N$} is equivalent to:

\vspace*{2mm}
\noindent {\bf MSOC$^N$':}
\begin{align}
\min_{\substack{c^1 \cdots c^{N+1} \in \setF \\ \sum_{j=1}^{N+1} p^j = 1,\  p^j\ge0}} & \quad \sum_{j=1}^{N+1} p^j c^j_0 \label{eq:objMix3} \\
s.t. & \quad \sum_{j=1}^{N+1} p^j c^j_i \le V_i, \label{eq:ccMix3}
\end{align} 
where $c^j$ is the $j$-th cost vector and $c^j_i$ is its $i$-th component.	
We note that, when actually solving the problem, we do \textit{not} explicitly compute $\setF$. We introduce it for the ease of understanding.

For later convenience, we will derive another equivalent form to {MSOC$^N$}. Let 
\begin{align}
\setF_{\rm M}^N := \left\{ \sum_{j = 1}^{N+1} p^j c^j \ | \ c^j \in \setF, 0\le p^j, \sum_{j=1}^{N+1} p^j = 1 \right\}.\label{eq:setFM}
\end{align}
Observe that {MSOC$^N$'} is equivalent to:

\vspace*{2mm}
\noindent {\bf MSOC$^N$'':}
\begin{align}
c_{\rm M}^{\star N} = \min_{c \in \setF_{\rm M}^N} & \quad c_0 \label{eq:objMix2} \\
s.t. & \quad  c_i \le V_i, \quad i = 1 \cdots K. \label{eq:ccMix2}
\end{align} 

Let $\mathbb{N}$ be the set of positive integers. The following theorem holds:
\begin{theorem}
\begin{equation}
K \in \arg \min_{N \in \mathbb{N}} c_{\rm M}^{\star N}.
\end{equation}
\end{theorem}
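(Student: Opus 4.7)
The plan is to reformulate MSOC$^N$'' as an optimization over a nested family of convex sets and then invoke a sharpened form of Carathéodory's theorem. Observe first that $\setF_{\rm M}^N \subseteq \setF_{\rm M}^{N+1}$ (simply set $p^{N+2}=0$), hence $c_{\rm M}^{\star N}$ is non-increasing in $N$. Moreover, by the very definition of the convex hull, $\bigcup_{N\ge 1}\setF_{\rm M}^N = \conv(\setF)$, so the limit $c_{\rm M}^{\star\infty} := \inf_{N\in\mathbb{N}} c_{\rm M}^{\star N}$ equals the value of $\min\{c_0 \,:\, c \in \conv(\setF),\ c_i \le V_i \text{ for } i=1,\dots,K\}$. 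The task reduces to showing that this convex-hull optimum is attained by some convex combination of at most $K+1$ vectors in $\setF$, which gives $c_{\rm M}^{\star K} = c_{\rm M}^{\star\infty} \le c_{\rm M}^{\star N}$ for every $N$.

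The key geometric observation is that any optimizer $c^\star$ of the convex-hull problem must lie on the (relative) boundary of $\conv(\setF)$. Indeed, if $c^\star$ were an interior point, then for sufficiently small $\varepsilon>0$ the perturbed point $c^\star - \varepsilon e_0$ (where $e_0$ is the unit vector along the cost coordinate) would still lie in $\conv(\setF)$ and still satisfy $c_i \le V_i$ (those coordinates being unchanged), while strictly decreasing the objective $c_0$---contradicting optimality.

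Now I would invoke the sharpened Carathéodory theorem: any boundary point of $\conv(S)$ for $S \subset \mathbb{R}^{K+1}$ can be written as a convex combination of at most $K+1$ points of $S$ (one degree of freedom being ``used up'' by lying on a supporting hyperplane). Applying this with $S = \setF$ yields $c^1,\dots,c^{K+1}\in\setF$ and weights $p^1,\dots,p^{K+1}\ge 0$ summing to $1$ with $\sum_j p^j c^j = c^\star$. This convex combination is feasible for MSOC$^K$'', so $c_{\rm M}^{\star K} \le c_0^\star = c_{\rm M}^{\star\infty}$. Combined with the lower bound $c_{\rm M}^{\star K} \ge c_{\rm M}^{\star\infty}$ we conclude $c_{\rm M}^{\star K} = c_{\rm M}^{\star\infty} \le c_{\rm M}^{\star N}$ for all $N$, proving $K \in \arg\min_N c_{\rm M}^{\star N}$.

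The main technical obstacle will be ensuring an actual minimizer $c^\star$ exists (rather than only an infimum): this requires $\conv(\setF) \cap \{c_i\le V_i\}$ to be closed, which one can justify under the paper's standing assumption that $\setF$ is closed together with a mild boundedness or coercivity condition on $c_0$. Absent such a condition, the Carathéodory argument should instead be applied to an $\varepsilon$-optimal point and then completed by a limiting argument that exploits the closedness of $\setF$.
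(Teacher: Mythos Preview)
Your proposal is correct and follows the same broad strategy as the paper---monotonicity of $c_{\rm M}^{\star N}$ in $N$ together with Carath\'eodory's theorem---but you are in fact \emph{more careful} than the paper at the crucial dimension count. The paper simply asserts that, since $\setF\subset\mathbb{R}^{K+1}$, Carath\'eodory gives $\setF_{\rm M}^i=\conv(\setF)$ for all $i\ge K$, and concludes from that. Taken literally this is an off-by-one slip: standard Carath\'eodory in $\mathbb{R}^{K+1}$ yields convex combinations of $K+2$ points, so it only gives $\setF_{\rm M}^{K+1}=\conv(\setF)$, not $\setF_{\rm M}^K$. Your additional boundary argument---the optimizer cannot be interior, so it lies on a supporting hyperplane and hence in the convex hull of $\setF$ intersected with a $K$-dimensional affine subspace---is exactly what is needed to drop the count from $K+2$ to $K+1$ and justify the stated theorem. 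In other words, your ``sharpened Carath\'eodory'' step is not a detour but the missing ingredient in the paper's own proof.

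Your caveat about existence of a minimizer is also well placed: the paper silently assumes attainment (it writes $\min$ rather than $\inf$ throughout and assumes $\setF$ closed), so under the paper's standing hypotheses your argument goes through without the $\varepsilon$-limiting workaround.
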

\begin{proof}
From the definition of $\setF_{\rm M}^N$, it is obvious that 
\[
\setF_{\rm M}^i \subseteq \setF_{\rm M}^{i+1}, \quad \forall i \in \mathbb{N}.
\]
Therefore,
\begin{equation}
c_{\rm M}^{\star i} \ge c_{\rm M}^{\star i+1}, \quad \forall i \in \mathbb{N}. \label{eq:pf1}
\end{equation}
Also, since $\setF \subset \mathbb{R}^{K+1}$, it follows from Carath\/{e}odory's Theorem that
\[
\setF_{\rm M}^i = \conv(\setF), \quad \forall i \in \mathbb{N}, i \ge K,
\]
where $\conv(\cdot)$ is the convex hull of a set. Therefore, 
\begin{equation}
c_{\rm M}^{\star i} = c_{\rm M}^{\star i+1},\quad  \forall i \in \mathbb{N}, i \ge K.\label{eq:pf2}
\end{equation}
The theorem follows from (\ref{eq:pf1}) and (\ref{eq:pf2}).
\end{proof}

\MEC{ adding the Slater condition assumption.}
Theorem 1 means that we only need to consider $K$-randomization in order to minimize the expected cost.
In the remainder of this paper we only consider  {MSOC$^K$}, which we simply denote by {MSOC}. Its dual problem {DMSOC$^K$} will play an important role in the analysis further in the paper. Therefore, we will further assume that the Slater condition is satisfied for the {MSOC} problem, i.e., there exists a feasible point in the relative interior of conv$(\mathcal{F})$. The Slater condition guarantees a zero-duality gap. In general, it is easy to check and is rarely violated in practical problems.

We also use the following simplified notation:
\[
c_{\rm M}^{\star} := c_{\rm M}^{\star K}.
\]

\subsection{Cost Reduction by Randomization}
\label{sec:cost_reduction}

We next discuss under what condition the mixed strategy control can outperform pure strategies, and if it does, by how much.

\begin{lemma}\label{lem:dominance}  
The optimal mixed strategy control is at least as good as the optimal pure strategy control, that is:
\[
c_{\rm M}^{\star} \le c_{\rm P}^{\star}
\]
\end{lemma}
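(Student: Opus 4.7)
The plan is to exploit the equivalent reformulations \textbf{PSOC'} and \textbf{MSOC}$^K$\textbf{''} given in the previous subsection, and show that the pure-strategy problem is simply the mixed-strategy problem restricted to a subset of its feasible region. Since minimizing the same objective over a larger set can only decrease (or leave unchanged) the optimal value, the inequality $c_{\rm M}^\star \le c_{\rm P}^\star$ follows immediately.

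Concretely, I would first observe that $\setF \subseteq \setF_{\rm M}^K$. This is because any cost vector $c \in \setF$ can be written as the degenerate mixture $c = \sum_{j=1}^{K+1} p^j c^j$ by choosing, for instance, $c^1 = c$, $p^1 = 1$, and $p^j = 0$ for $j = 2,\dots,K+1$ (the remaining $c^j$ can be chosen arbitrarily in $\setF$, which is nonempty whenever \textbf{PSOC} is feasible). Thus every point admissible in \textbf{PSOC'} is admissible in \textbf{MSOC}$^K$\textbf{''}, with identical objective value $c_0$ and identical constraint values $c_i \le V_i$.

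Consequently the feasible region of \textbf{MSOC}$^K$\textbf{''} contains that of \textbf{PSOC'}, and the two problems minimize the same linear function $c_0$ under the same constraints $c_i \le V_i$. Minimization over a superset yields a value no greater than minimization over the subset, so
\[
c_{\rm M}^\star = \min_{c \in \setF_{\rm M}^K,\ c_i \le V_i} c_0 \ \le\ \min_{c \in \setF,\ c_i \le V_i} c_0 = c_{\rm P}^\star.
\]

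There is no real obstacle here: the lemma is essentially a containment-of-feasible-sets argument, which is why the author states it as a lemma rather than a theorem. The only point that requires a brief remark is that the pure problem must be feasible for the inequality to be meaningful (if \textbf{PSOC} is infeasible then $c_{\rm P}^\star = +\infty$ by convention and the bound holds trivially); under the Slater-type assumption introduced just before the lemma, feasibility is guaranteed.
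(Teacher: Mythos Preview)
Your proof is correct and follows essentially the same approach as the paper: the paper's argument is the one-line observation $\setF = \setF_{\rm M}^0 \subseteq \setF_{\rm M}^K$, which is exactly your containment-of-feasible-sets argument spelled out in more detail. There is nothing to add.
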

\begin{proof}
If follows from the following:
\[
\setF = \setF_{\rm M}^0 \subseteq \setF_{\rm M}^K.
\]
\end{proof}
This result is obvious because a pure strategy control can be viewed as a mixed strategy control that always assigns the probability of one to a single control sequence. 
The next question then is under what condition mixed strategies strictly dominate pure strategies.

\begin{lemma}\label{lem:strictDominance}
The necessary condition for 
\[
c_{\rm M}^{\star} < c_{\rm P}^{\star}
\]
is that $\setF$ is a non-convex set.
\end{lemma}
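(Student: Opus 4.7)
The plan is to prove the contrapositive: if $\setF$ is convex, then $c_{\rm M}^{\star} = c_{\rm P}^{\star}$, which immediately yields the lemma.

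The key ingredients are already on the page. From the proof of Theorem 1, Carath\'eodory's theorem gives $\setF_{\rm M}^K = \conv(\setF)$. Combined with the reformulations PSOC' and MSOC$^K$'', which have identical objective ($\min c_0$) and identical constraints ($c_i \le V_i$) and differ only in their feasible sets ($\setF$ versus $\setF_{\rm M}^K$), the minimization problems are identical whenever $\setF = \conv(\setF)$. Hence, assuming $\setF$ is convex,
\[
\setF_{\rm M}^K = \conv(\setF) = \setF,
\]
so the two optimization problems have the same feasible set and the same objective, giving $c_{\rm M}^{\star} = c_{\rm P}^{\star}$. Together with Lemma \ref{lem:dominance}, which rules out $c_{\rm M}^{\star} > c_{\rm P}^{\star}$, this establishes that convexity of $\setF$ forces equality and hence precludes strict dominance.

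There is no real obstacle here; the heavy lifting was done by Theorem 1 (through Carath\'eodory) and the equivalent reformulations PSOC' and MSOC$^K$''. If one were being careful, the only subtlety worth stating explicitly is that both infima are attained (or at least agree as infima) on the same set, which is immediate since the sets coincide; the closedness of $\setF$ assumed earlier can be invoked if attainment is desired. The proof is therefore short: cite the Carath\'eodory computation from Theorem 1, observe $\conv(\setF) = \setF$ under the convexity hypothesis, and conclude that PSOC' and MSOC$^K$'' are literally the same optimization problem.
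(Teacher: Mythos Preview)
Your proposal is correct and matches the paper's own proof almost verbatim: prove the contrapositive, use $\setF_{\rm M}^K = \conv(\setF)$ from the Carath\'eodory argument in Theorem~1, and observe that convexity of $\setF$ collapses the two feasible sets so that PSOC' and MSOC$^K$'' coincide. The reference to Lemma~\ref{lem:dominance} is harmless but unnecessary, since you establish equality directly rather than via two inequalities.
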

\begin{proof}
We prove the contraposition.
If $\setF$ is a convex set, then
\[
\setF = \conv(\setF) = \setF_{\rm M}^K.
\]
Hence,
\[
c_{\rm M}^{\star} = c_{\rm P}^{\star}.
\]
\end{proof}
The non-convexity of $\setF$ is not a sufficient condition for the strict dominance because the optimal solution to {MSOC$^K$} could be in $\setF$.
Also note that the convexity of $\setF$ is implied by the convexity of  {PSOC}, but not vice versa.

Figure \ref{fig:duality} provides a graphical interpretation of the above Lemmas in the case of $K=1$. 
The set painted in solid blue is $\setF$. 
Among  $\setF$, the areas to the left of the vertical line at $c_1 = V_1$ satisfies the constraint. 
Hence, the optimal solution to {PSOC} is located at the intersection of the vertical line and the lower edge of $\setF$, called the \textit{minimum common point} and shown in the green dot in Figure \ref{fig:duality}. 
Likewise, the optimal solution to {MSOC$^K$} is the minimum common point of $\conv(\setF)$ and the vertical line.
The dominance of mixed strategy (Lemma \ref{lem:dominance}), as well as the necessary condition for strict dominance (Lemma \ref{lem:strictDominance}), is graphically obvious from Figure \ref{fig:duality}.

\begin{figure}[tb]
  \begin{center}
    \includegraphics[width=\columnwidth]{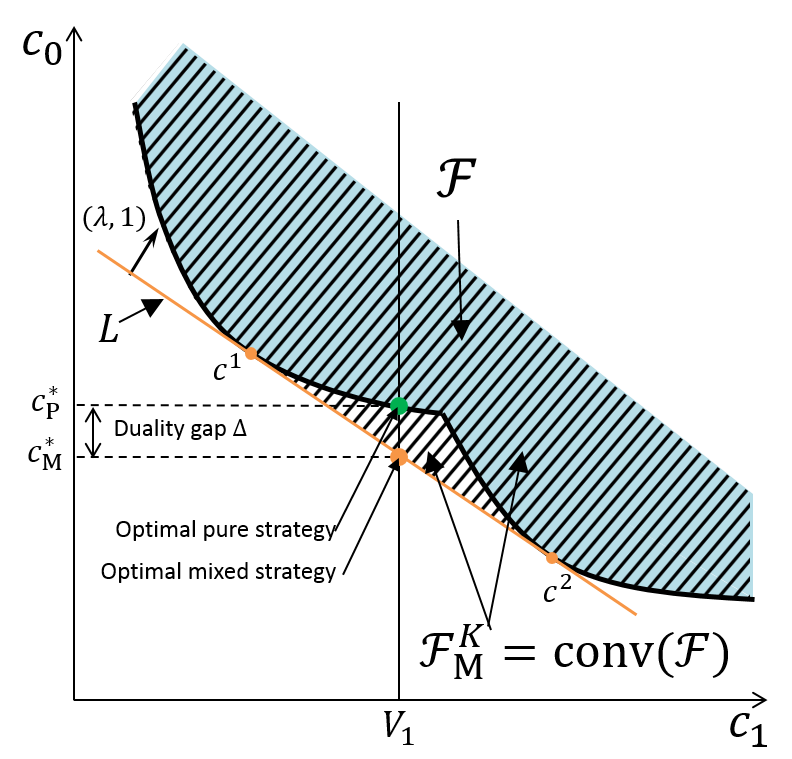}
  \end{center}
  \caption{
  \todo{Replace $\lambda^\star$ with $\lambda_1$. Add $L$. Add $\Delta$. add $c_1$, $c_2$.} 
  A graphical interpretation of Lemmas \ref{lem:dominance} and \ref{lem:strictDominance} and Theorem XX through the MC/MC framework \cite{Bertsekas_convex_optimization_theory}. Mixed strategy improves cost by extending the feasible search space through convexification.} 
  \label{fig:duality}
\end{figure}

What follows next is the discussion on by how much the mixed strategy can improve the expected cost, which requires some preparations.
The following is the dual optimization problem of {PSOC'}, where the dual optimal cost is denoted by $p_{\rm P}^\star$:

\vspace*{2mm}
\noindent {\bf DPSOC (Dual of PSOC)}
\begin{align}
q_{\rm P}^\star = \max_{\lambda \ge 0} \min_{c \in \setF} & \quad c_0 + \lambda(c_{1:K} - V), \label{eq:dual_orig} 
\end{align}
where $\lambda = [\lambda_1, \cdots \lambda_K]$ is the dual variables, $c_{1:K} = [c_1, \cdots c_K]^T$, and $V = [V_1, \cdots V_K]^T$ ($\cdot^T$ is the matrix transpose). 
From a standard result in optimization, $q_{\rm P}^\star \le c_{\rm P}^\star$.
 \todo{need citation?}.
The duality gap is denoted by $\Delta$, that is,
\[
\Delta = c_{\rm P}^\star - q_{\rm P}^\star.
\]

The dual optimization of {MSOC$^K$'} is given as follows:

\vspace*{2mm}
\noindent {\bf DMSOC$^K$ (Dual of MSOC$^K$')}
\begin{align}
q_{\rm M}^\star = \max_{\lambda \ge 0} \min_{c \in \conv(\setF)} & \quad c_0 + \lambda(c_{1:K} - V). \label{dual_mix}
\end{align}
Since $\conv(\setF)$ is convex, there is no duality gap, hence $q_{\rm M}^\star = c_{\rm M}^\star$.

It turns out that the improvement in expected cost by the optimal mixed strategy is equal to the duality gap of {PSOC}.
\begin{theorem}\label{th:gap}
\[
c_{\rm M}^\star = c_{\rm P}^\star - \Delta.
\]
\end{theorem}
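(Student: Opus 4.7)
The plan is to connect the mixed-strategy optimal cost to the dual of the pure-strategy problem via two facts: (i) MSOC has zero duality gap, and (ii) the pure and mixed dual problems have the same optimal value. Together these give $c_{\rm M}^\star = q_{\rm M}^\star = q_{\rm P}^\star = c_{\rm P}^\star - \Delta$.

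First I would recall what is already established in the text: the Slater condition for MSOC (interior feasibility of $\conv(\setF)$) implies strong duality, so $c_{\rm M}^\star = q_{\rm M}^\star$. This is the easy half and is essentially done.

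The core step is to show that DPSOC and DMSOC$^K$ have the same optimal value, i.e., $q_{\rm P}^\star = q_{\rm M}^\star$. The key observation is that, for any fixed $\lambda \ge 0$, the inner objective $L_\lambda(c) := c_0 + \lambda(c_{1:K} - V)$ is an \emph{affine} function of $c$. For an affine function, the minimum over a set equals the minimum over its convex hull. To prove this, the inclusion $\setF \subseteq \conv(\setF)$ immediately yields $\min_{c \in \conv(\setF)} L_\lambda(c) \le \min_{c \in \setF} L_\lambda(c)$. For the reverse direction, any $\bar c \in \conv(\setF)$ can (by Carathéodory, as used in the proof of Theorem~1) be written as $\bar c = \sum_{j=1}^{K+2} p^j c^j$ with $c^j \in \setF$ and $p^j \ge 0$ summing to one; affinity then gives
\begin{equation*}
L_\lambda(\bar c) = \sum_{j=1}^{K+2} p^j L_\lambda(c^j) \ge \min_j L_\lambda(c^j) \ge \inf_{c \in \setF} L_\lambda(c).
\end{equation*}
Taking the infimum over $\bar c \in \conv(\setF)$ gives the reverse inequality. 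Taking the max over $\lambda \ge 0$ on both sides then yields $q_{\rm P}^\star = q_{\rm M}^\star$.

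Putting the pieces together: $c_{\rm M}^\star = q_{\rm M}^\star = q_{\rm P}^\star$ and, by definition of the duality gap, $q_{\rm P}^\star = c_{\rm P}^\star - \Delta$, which proves the theorem. The only delicate point will be handling the inner minimization when it is not attained (so I would phrase the argument in terms of infima throughout, or alternatively rely on the standing assumption that $\setF$ is closed together with a coercivity-type argument on $L_\lambda$ to ensure attainment); beyond that, the proof is essentially a two-line chain once the linearity-of-Lagrangian observation is made.
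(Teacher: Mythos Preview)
Your proposal is correct and is essentially the same argument as the paper's: both directions of the key equality $\min_{c\in\conv(\setF)} L_\lambda(c)=\min_{c\in\setF} L_\lambda(c)$ are proved exactly as you describe (inclusion for one direction, convex-combination-of-scalars for the other), followed by the chain $c_{\rm M}^\star=q_{\rm M}^\star=q_{\rm P}^\star=c_{\rm P}^\star-\Delta$. The only cosmetic differences are that the paper represents points of $\conv(\setF)$ with $K{+}1$ summands (via Theorem~1) rather than invoking Carath\'eodory with $K{+}2$, and does not dwell on the inf-versus-min issue you flag.
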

\begin{proof}
Since $\setF \subseteq \conv(\setF)$,
\begin{equation}
\min_{c \in \conv(\setF)} \ c_0 + \lambda c_{1:K} \le \min_{c \in \setF} \ c_0 + \lambda c_{1:K}. \label{eq:proof1}
\end{equation}
The left hand side of the above is equivalent to 
\[
\min_{\substack{
c^1 \cdots c^{K+1} \in \setF \\ \sum_{i=1}^{K+1} p^i = 1,\  p^i\ge0}} \ \sum_{i=1}^{K+1} p^i (c^i_0 + \lambda c^i_{1:K}).
\]
A convex combination of a set of scalers cannot be less than the minimum of the set, i.e., for $\sum_{i=1}^n p_i = 1,  p_i\ge0, c^i \in \setF$, 
\[
\sum_{i=1}^{K+1} p^i (c^i_0 + \lambda c^i_{1:K}) \ge \min_{c \in \setF} \ c_0 + \lambda c_{1:K}.  
\]
Hence,
\begin{equation}
\min_{c \in \conv(\setF)} \ c_0 + \lambda c_{1:K} \ge \min_{c \in \setF} \ c_0 + \lambda c_{1:K}. \label{eq:proof2}
\end{equation}
From (\ref{eq:proof1}) and (\ref{eq:proof2}), it follows that
\[
\min_{c \in \conv(\setF)} \ c_0 + \lambda c_{1:K} = \min_{c \in \setF} \ c_0 + \lambda c_{1:K}. 
\]
Since $V$ is a constant, it follows from the above that $q_{\rm M}^\star = q_{\rm P}^\star$. Therefore, 
\begin{equation}
c_{\rm M}^\star = q_{\rm M}^\star = q_{\rm P}^\star = c_{\rm P}^\star - \Delta.\label{eq:hoge1}
\end{equation}
\end{proof}

The graphical interpretation of Theorem \ref{th:gap} is given by Figure \ref{fig:duality}.
Consider a hyperplane $L$ that contain $\setF$ in their upper closed halfspace and intersects with $\setF$, that is, $\setF \cap L$ is nonempty. Let the normal vector of $L$ be $[\lambda_1, \cdots \lambda_K, 1]$.
The points in $\setF \cap L$ correspond to the optimal solutions to the inner optimization problem of (\ref{eq:dual_orig}) given $\lambda$.
The value of $c_0$ at the crossing point between $L$ and $c_1 = V$ is the dual objective value.
Therefore, the optimal dual solution to {DPSOC} corresponds to the line that has the \textit{maximum crossing point} of $c_1 = V$ \cite{Bertsekas_convex_optimization_theory}.
Observe that the maximum crossing point for {DPSOC}, shown as the orange point in Figure \ref{fig:duality}, is the same for the minimum common point (i.e., the primal optimal solution) for {MSOC$^K$}.
Therefore the reduction in expected cost brought by mixed strategy is equivalent to the duality gap in {PSOC}.

\subsection{Solution approach}
\label{sec:solution}

A naive approach to solve {MSOC$^K$} is simply to solve (\ref{eq:objMix})-(\ref{eq:ccMix}).
However, the multiplication of $p^j$ increases the problem complexity (e.g., linear v.s. bilinear), making it difficult to solve.
Instead, in this paper, we present an efficient approach to solve {MSOC$^K$} by solving the dual of {PSOC}.
This approach is built on the fact revealed in the proof of Theorem \ref{th:gap} that the optimal dual solutions to {MSOC$^K$} and {PSOC} are the same.

Let $\lambda^\star$ be the optimal solution to {DPSOC}, (\ref{eq:dual_orig}).
Let $C(\lambda)$ be the set of all the optimal solutions to the inner optimization problem of {DPSOC}, that is,
\begin{equation}
C(\lambda) = \arg \min_{c \in \setF}  \quad c_0 + \lambda(c_{1:K} - V). \label{eq:dual_inner}
\end{equation}
For example, in case of Figure \ref{fig:duality}, $C(\lambda) = \{c^1, c^2\}.$ 
\todo{Fix the figure.}

\begin{theorem}\label{th3}
The necessary and sufficient condition for $(c^1 \cdots c^{K+1}, p^1 \cdots p^{K+1})$ to be an optimal solution to {MSOC$^K$'}, (\ref{eq:objMix3})-(\ref{eq:ccMix3}), is that there exist $\lambda = [\lambda_1 \cdots \lambda_K], \lambda_i \ge 0$, \MEC{ why not add this constraint into the equations?} that satisfy the followings:
\begin{align}
a) & \ c^i \in C(\lambda)  \ \vee \ p^i = 0, \ \forall i = 1 \cdots K+1 \nonumber\\
&\text{\MEC{ not sure if $\vee$ is the right symbol to use.}}\nonumber\\
b) & \ \lambda \left( \sum_{i=1}^{K+1} p^i c^i_{1:K} - V \right) = 0, \nonumber\\
c) & \ \sum_{i=1}^{K+1} p^i = 1,\nonumber \\
d) & \ p^i \ge 0 \ \forall i = 1 \cdots K+1, \nonumber \\
e) & \ \sum_{i=1}^{K+1} p^i c^i_{1:K} \le V , and \nonumber \\
f) & \ c^i \in \setF,  \ \forall i = 1 \cdots K+1.  \label{eq:th3}
\end{align}
\end{theorem}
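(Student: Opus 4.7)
The plan is to leverage the equivalent form MSOC$^K$'' from (\ref{eq:objMix2})--(\ref{eq:ccMix2}), which minimizes $c_0$ over $c \in \conv(\setF)$ subject to $c_{1:K} \le V$. Since $\conv(\setF)$ is convex by construction and Slater's condition is assumed to hold, the KKT conditions are both necessary and sufficient for optimality of MSOC$^K$''. I will write out these KKT conditions, then unfold the representation $c = \sum_{i=1}^{K+1} p^i c^i$ with $c^i \in \setF$ to recover conditions (a)--(f).

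First I would state the KKT conditions for MSOC$^K$'' at a putative optimum $c^\star \in \conv(\setF)$: there exists $\lambda \ge 0$ such that (i) $c^\star_{1:K} \le V$, (ii) $\lambda(c^\star_{1:K} - V) = 0$, and (iii) $c^\star \in \arg\min_{c \in \conv(\setF)} \{c_0 + \lambda c_{1:K}\}$. Next I would substitute $c^\star = \sum_{i=1}^{K+1} p^i c^i$ with $c^i \in \setF$, $p^i \ge 0$, $\sum p^i = 1$. Primal feasibility (i) and complementary slackness (ii) then translate directly into conditions (e), (b), respectively, while (c), (d), (f) are just the conditions that the $(c^i,p^i)$ form a valid convex-combination representation of a point in $\conv(\setF)$.

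The crux of the argument, and what I expect to be the main technical step, is translating the stationarity condition (iii) into condition (a). Here I would invoke the key identity established inside the proof of Theorem~\ref{th:gap}, namely
\[
\min_{c \in \conv(\setF)} \bigl(c_0 + \lambda c_{1:K}\bigr) \;=\; \min_{c \in \setF} \bigl(c_0 + \lambda c_{1:K}\bigr),
\]
whose right-hand side is attained precisely on $C(\lambda)$. Thus (iii) says
\[
\sum_{i=1}^{K+1} p^i \bigl(c^i_0 + \lambda c^i_{1:K}\bigr) \;=\; \min_{c \in \setF} \bigl(c_0 + \lambda c_{1:K}\bigr).
\]
Each summand satisfies $c^i_0 + \lambda c^i_{1:K} \ge \min_{c \in \setF}(c_0 + \lambda c_{1:K})$, and $(p^i)$ is a probability vector, so the equality is possible only if for every index $i$ with $p^i > 0$ we have $c^i_0 + \lambda c^i_{1:K} = \min_{c \in \setF}(c_0 + \lambda c_{1:K})$, i.e.\ $c^i \in C(\lambda)$. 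Conversely, if for every $i$ either $p^i = 0$ or $c^i \in C(\lambda)$, then the weighted average equals the common minimum value, recovering (iii). This establishes (a) $\Leftrightarrow$ (iii) and completes the equivalence.

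For the necessity direction I would finally note that the $\lambda$ appearing in the KKT system is exactly the optimal dual variable of MSOC$^K$, and by the proof of Theorem~\ref{th:gap} it coincides with the optimal dual variable $\lambda^\star$ of DPSOC, so condition (a) really does assert membership in the set $C(\lambda)$ defined in (\ref{eq:dual_inner}). Sufficiency follows by running the same chain backwards: conditions (c), (d), (f) assert $c^\star := \sum p^i c^i \in \conv(\setF)$; (e) is primal feasibility; (b) is complementary slackness; (a) together with the identity above gives stationarity over $\conv(\setF)$; and since MSOC$^K$'' is convex with Slater's condition, KKT suffices for global optimality.
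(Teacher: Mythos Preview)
Your proposal is correct and takes a route that is close in spirit to the paper's, but organized differently and, in one respect, cleaner. The paper argues \emph{sufficiency} by a direct computation: using (b) and (c) it writes $\sum_i p^i c^i_0 = \sum_i p^i\{c^i_0 + \lambda(c^i_{1:K}-V)\}$, then invokes (a) and the identity $c^i \in C(\lambda) \Leftrightarrow c^i_0 + \lambda(c^i_{1:K}-V) = q_{\rm P}^\star$ to conclude the objective equals $q_{\rm P}^\star = c_{\rm M}^\star$; feasibility is read off from (d)--(f). This is the same calculation you perform when translating your stationarity condition (iii) into (a), so the two sufficiency arguments are essentially identical once unpacked.

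The difference lies in \emph{necessity}. The paper argues by contraposition, asserting that (b)--(f) are ``part of the KKT conditions'' for MSOC$^K$' and that failure of (a) alone forces $\sum_i p^i c^i_0 > c_{\rm M}^\star$. Your route---passing to the convex reformulation MSOC$^K$'' over $\conv(\setF)$, invoking KKT there (where it is fully justified by convexity plus the assumed Slater condition), and then unfolding the convex-combination representation---sidesteps the awkwardness of appealing to KKT for a problem whose feasible set involves the nonconvex constraint $c^j \in \setF$. In that sense your argument is tighter: it makes explicit why the optimality conditions hold, rather than citing KKT for a nonconvex problem. The cost is a small extra bookkeeping step to check that optimality in MSOC$^K$' and in MSOC$^K$'' correspond (same objective value $\sum_i p^i c^i_0 = c^\star_0$, same optimal value $c_{\rm M}^\star$), which you handle correctly.
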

\begin{proof} \ \\
\textit{Sufficiency}: 
It follows from d), e), and f) that $(c^1 \cdots c^{K+1}, p^1 \cdots p^{K+1})$ satisfies all the constraints of {MSOC$^K$'}. 
\MEC{ not clear why the following argument holds.}
With regard to a), note that:
\[
c^i \in C(\lambda) \Longleftrightarrow c^i_0 + \lambda (c^i_{1:K} - V) = q_{\rm P}^\star.
\]
It follows from a), b), and c) that
\begin{align}
   & \sum_{i=1}^{N+1} p^i c^i_0 \nonumber \\
= & \sum_{i=1}^{N+1} p^i c^i_0 + \lambda \left( \sum_{i=1}^{K+1} p^i c^i_{1:K} - V \right)  \nonumber \\
= & \sum_{i=1}^{N+1} p^i \{c^i_0 + \lambda (c^i_{1:K} - V)\} = q_{\rm P}^\star = c_{\rm M}^\star. \label{eq:th3-2}
\end{align}
Since we know that the minimum objective value of {MSOC$^K$'} is $c_{\rm M}^\star$, $(c^1 \cdots c^{K+1}, p^1 \cdots p^{K+1})$ is an optimal solution to {MSOC$^K$'}.

\textit{Necessity}: We prove the contraposition. Note that b)-f) are part of the KKT conditions [\MEC{ I think KKT conditions are for unconstrained optimization, but since $c$ must be in the set $\mathcal{F}$, then we need to use the more general theorem.}] for {MSOC$^K$'}.
Therefore, if any of b)-f) does not hold, $(c^1 \cdots c^{K+1}, p^1 \cdots p^{K+1})$ is not an optimal solution.  
Next, assume that only a) does not hold, that is,
\[
c^i_0 + \lambda (c^i_{1:K} - V) = q_{\rm P}^\star \ \wedge p^i > 0, \ \forall i = 1 \cdots K+1.
\]
Using ({eq:th3-2}), we have $\sum_{i=1}^{N+1} p^i c^i_0 > c_{\rm M}^\star$. Therefore $(c^1 \cdots c^{K+1}, p^1 \cdots p^{K+1})$ is not an optimal solution to {MSOC$^K$'}.
\end{proof}

\begin{remark}
Theorem \ref{th3}  can be immediately adapted to the original {MSOC$^K$}, i.e., (\ref{eq:objMix})-(\ref{eq:ccMix}). Let
\begin{equation}
U(\lambda) = \arg \min_{\substack{\mu \in \mathcal{M}^T \\ g(x, \mu, w) = 0}}  \ \mathbb{E}\left[ f_0(x, \mu) \right] - \lambda \left( \mathbb{E}\left[ f_{i:K}(x, \mu) \right] - V \right).\label{eq:inner}
\end{equation}
Then a) is replaced by the following condition: \\
a') \ $\mu^i \in U(\lambda)  \ \vee \ p^i = 0, \ \forall i = 1 \cdots K+1$.
\end{remark}

The uniqueness of Theorem \ref{th3} is in a). 
It means that the $K$ candidate control sequences, from which the controller choose randomly, can be obtained by solving the dual of the pure-strategy problem. 
More specifically, {MSOC$^K$} can be solved in the following process:
\begin{enumerate}
\item Solve {DPSOC} and obtain the optimal dual solution, $\lambda^\star$
\item If $\lambda^\star = 0$, optimal solutions to  {PSOC} are also optimal for {MSOC$^K$} because (\ref{eq:dual_inner}) reduces to {PSOC} with $\lambda = 0$.

\item If $\lambda^\star > 0$, \MEC{ $\lambda$ is a vector, so we need to replace by $\lambda^\star \neq 0$.}
\begin{itemize}
\item Solve (\ref{eq:dual_inner}) to obtain $C(\lambda^\star)$
\item Find $c^{i\star} \in C(\lambda^\star)$ and $p^{i\star} \ge 0$ such that  $\sum_{i=1}^{K+1} p^{i\star} c^{i\star}_{1:K} = V$ and $\sum_{i=1}^{K+1} p^{i\star} = 1$. \MEC{ This claim makes perfect sense geometrically, but is there a theoretical argument for the existence of such $c^{i\star} \in C(\lambda^\star)$? Also from the computation standpoint, $C(\lambda^\star)$ can be non-convex, so finding $c^{i\star}$ that satisfy the equations can be a difficult problem to solve.}
\end{itemize}
\end{enumerate}
The concrete solution method of {DPSOC} depends on problems.
In general it can be solved by a general convex optimization method such as subgradient method.
The multidimensional bisection method \cite{multidimentional_bisection} can solve it more efficiently if applicable.
More efficient and specialized solution approach would be available for special cases of {MSOC$^K$}.
However, such specialized solution approaches are out of the scope of this paper, except for the one that is discussed in the following subsection.


\subsection{Efficient Solution \MEC{ I'll be careful in using the word ``efficient'', in computer science it means a polynomial-time algorithm, so we need to do a further complexity study (for example, MILP cannot be efficiciently solved). } for $K=1$}\label{sec:root_finding}
PSOC with $K=1$ (i.e., there is only one stochastic constraint) has important applications, most notably the problems with a joint chance-constraint, which imposes the upper bound on the probability of violating \textit{any} constraints at \textit{any} time steps during the planning horizon. 
A joint chance-constraint has a practical importance since it provides the operator of a system an intuitive way to specify the acceptable level of risk of an entire plan. 
For example, in the Mars Exploration Rovers (MER) mission, ground operators made decisions on trajectory correction maneuver before atmospheric entry with a lower bound on the probability of successful landing (the thresholds for Spirit and Opportunity rovers were 91\% and 96\%, respectively) \cite{MarsLandingAnalysis2004}.

When $K$ = 1, DPSOC can be solved very efficiently by a root finding method. 
Furthermore, $\mu^1, \mu^2 \in U(\lambda^\star)$ and $c^1, c^2 \in C(\lambda^\star)$ are obtained as by-products of root finding. 
This involves evaluating the dual objective function repeatedly by solving (\ref{eq:inner}) with varying $\lambda \ge 0$.
The convergence is very fast; some of standard root finding algorithms, such as Brent's method, have a superlinear convergence rate.

\begin{figure}[tb]
  \begin{center}
    \includegraphics[width=0.8\columnwidth]{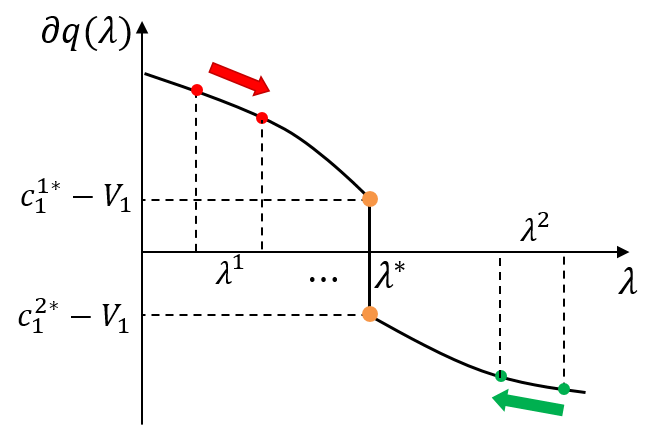}
  \end{center}
  \caption{The optimal dual solution is at the zero crossing of the subgradient of the dual objective, which is found by a root finding algorithm.} 
  \label{fig:subgradient}
  \begin{center}
    \includegraphics[width=\columnwidth]{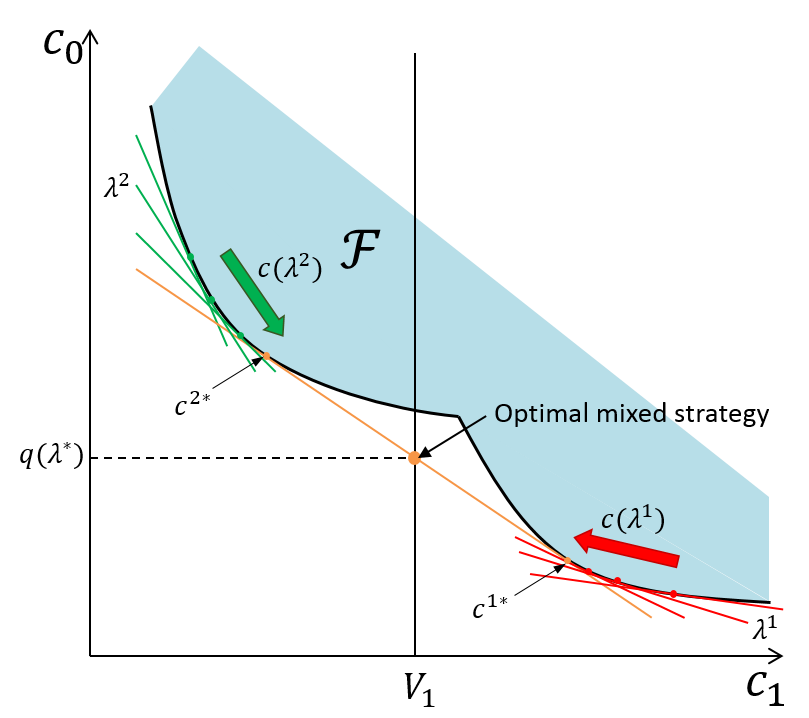}
  \end{center}
  \caption{
  \todo{Graphical interpretation of the proposed algorithm. A root-finding algorithm iteratively narrows the interval $[r_{\lambda_U^i}, r_{\lambda_L^i}]$. At convergence the optimal mixed strategy is at the intersection of $r = \Delta$ and the line that goes through  $(c_U^\star, r_U^\star)$ and $(c_L^\star, r_L^\star)$.}
  ~}
  \label{fig:algorithm}
\end{figure}

Let $q(\lambda)$ be the dual objective function of DPSOC, that is, 
\begin{equation}
q(\lambda) = \min_{c \in \setF} \quad c_0 + \lambda(c_1 - V_1).\label{eq:inner2}
\end{equation}
From a standard result of convex optimization theory, $q(\lambda)$ is a concave function \cite{Boyd_Textbook}, hence its subgradient $\partial q(\lambda)$ is monotonically decreasing, as shown in Figure \ref{fig:subgradient}.
Dual optimal solution, $\lambda^\star$, lies at the zero-crossing of $q(\lambda)$.
Also from a standard result of convex optimization theory is that: 
\[
c \in C(\lambda) \Rightarrow c_1 - V \in \partial q(\lambda).
\]
Therefore the dual optimization problem can be solved by finding a root of $c_1 - V$. Standard root finding algorithms can be used, such as bisection method and Brent's method \cite{Atkinson}. 

We assume that there is an algorithm that takes $\lambda$ and returns an optimal solution solution to
(\ref{eq:inner}), $\mu(\lambda) \in U(\lambda)$, as well as an optimal solution to (\ref{eq:inner2}), $c(\lambda) \in C(\lambda)$, which satisfies a) and f) of Theorem 3.
We denote by $c_i(\lambda)$ the $i$-th component of $c_i(\lambda)$.
The root finding algorithm is initiated with an interval, $[\lambda^1, \lambda^2]$, which includes $\lambda^\star$.
The interval is tightened iteratively until a certain terminal condition is met.
Through the iteration, $c(\lambda^1)$, $c(\lambda^2)$, $\mu(\lambda^1)$, and $\mu(\lambda^2)$ converge to $c^{1\star}$, $c^{2\star}$, $\mu^{1\star}$, and $\mu^{2\star}$, respectively, while $\lambda^1$ and $\lambda^2$ converge to $\lambda^\star$, as illustrated in Figure \ref{fig:subgradient}.
If $\lambda^\star > 0$, $p^{1\star}$ and $p^{2\star}$ that satisfies b), c), and e) in Theorem 3 are computed by solving the following:
\begin{align*}
& p^{1\star} c^{1\star}_1 + p^{2\star} c^{2\star}_1 = V_1 \\
& p^{1\star} + p^{2\star} = 1.
\end{align*}
The solution to the above also satisfies d) because $\lambda^\star \in [\lambda^1, \lambda^2]$ implies $c_1(\lambda^1) \ge V_1 $ and $c_1(\lambda^2) \le V_1 $.
Therefore, $(c^{1\star}, c^{2\star}, p^{1\star}, p^{2\star})$ satisfies a)-f) of Theorem 3, hence it is an optimal solution to MSOC$^1$.
If $\lambda^\star = 0$, an optimal solution is $c^{1\star} = c^{2\star} = c(\lambda^\star)$ and $p^{1\star}$ and $p^{2\star})$ can be any that satisfies c) and d).

The optimal mixed control is to execute $\mu^{1\star}$ with probability $p^{1\star}$ and $\mu^{2\star}$ with $p^{2\star}$.

\section{Deployment on Linear SMPC}
The proposed algorithm is demonstrated with an implementation on a linear SMPC with normally distributed disturbance and polygonal obstacles in the state space.
Since the problem is nonconvex, a mixed strategy may outperform pure strategies.
A practical challenge is that (\ref{eq:inner}) is nonlinear, nonconvex programming. The nonlinearity comes from the cumulative distribution function (CDF) that is used to evaluate the probability of constraint violation. 
Although an efficient solvers are available for a limited classes of nonconvex programming such as mixed integer linear programming (MILP) and mixed integer quadratic programming (MIQP), the problem does not fall under these classes. 

Repeatedly solving such a problem could result in a prohibitive cost.
Our approach is to approximate the CDF with a piecewise linear function and convert the problem into MILP.

\subsection{Formulation}
We assume a linear discrete-time dynamics with $x_k \in \R^n$ and $u_k \in \U \subset \R^m$:
\[
x_{k+1} = Ax_k + Bu_k + w_k,
\]
where $w_k$ is a normally distributed zero-mean disturbance with the covariance of $\Sigma_w$. 
$\U$ is assumed to be a polytope, hence $\U = \{ u \in \R^m | P u \preceq q \}$, 
where $\preceq$ and $\succeq$ are the componentwise inequalities.
We assume that there are $M$ polytopic obstacles, whose interior is represented as:
\[
H_i x_k \succeq g_i, \quad  i = 1 \cdots M, \ k = 1 \cdots N.
\] 
A chance constraint in the form of (\ref{eq:cc}) is imposed to limit the probability of the violation of the obstacles is limited to $V$.
The cost function is the total $L^1$ norm of $u_k$ over the horizon, that is, $f_0(x, u) = \sum_{k=1}^N |u_k|_1$.
Since this cost function is deterministic, $\mathbb{E}\left[ f_0(x, u) \right] = f_0(x, u)$.
The PSOC is given as follows:
\begin{align*}
\min & \ \sum_{k=1}^N |u_k|_1\\
{\rm s.t.} & \ \Pr\left[ \bigvee_{i=1}^{M} \bigvee_{k=1}^{N} H_i x_k \succeq g_i \right] \le V  \\
& \  x_{k+1} = Ax_k + Bu_k + w_k, \ k = 1 \cdots N \\
& \ P u_k \preceq q, \ k = 1 \cdots N,
\end{align*}
where $\bigvee$ is the logical disjunction.
The inner optimization problem of the dual optimization, (\ref{eq:inner}), is given as:
\begin{align*}
\min & \ \sum_{k=1}^N |u_k|_1 + \lambda \left(\Pr\left[ \bigvee_{i=1}^{M} \bigvee_{k=1}^{N} H_i x_k \succeq g_i \right] - V \right)\\
{\rm s.t.} & \  x_{k+1} = Ax_k + Bu_k + w_k, \ k = 1 \cdots N \\
& \ P u_k \preceq q, \ k = 1 \cdots N.
\end{align*}

\subsection{Conversion to MILP}\label{sec:MILP}

We use a few tricks and approximations to convert the above problem into MILP.
We note that the probability of constraint violation is always approximated conservatively (meaning that it is overestimated) so that a solution to the approximated problem is always a feasible solution to the original problem. 
First, by replacing absolute values with slack variables, the $L^1$ norm objective is equivalent to the following:
\[
\min |u|_1 = \min \sum_{d=1}^m v_d \quad {\rm s.t.} \ v_d \ge u_d,  v_d \ge -u_d,
\]
where $u_d$ is the $d$-th component of vector $u$. 
Second, the joint probability is decomposed by Boole's inequality, whose conservatism is trivial in most practical cases where the risk bound $V$ is very small (e.g., $< 0.01$) \cite{Hiro_AAAI08}:
\[
\Pr\left[ \bigvee_{i=1}^{M} \bigvee_{k=1}^{N} H_i x_k \succeq g \right] \sim \sum_{i=1}^{M} \sum_{k=1}^{N} \Pr \left[H_i x_k \succeq g \right].
\]
The componentwise inequality in the probability is decomposed using the risk selection approach \cite{Hiro_ACC10}, which is again a conservative approximation. Let $h_{ij}$ and  $g_{ij}$ be the $j$-th row of $H_i$ and $g_i$, and $R_i$ be the number of rows,
\begin{equation}
\min  \Pr \left[H_i x_k \succeq g_i \right] \sim \min \delta \quad {\rm s.t.} \ \bigvee_{j=1}^{R_i} \Pr \left[h_{ij} x_k \ge g_{ij} \right] \le \delta. \label{eq:hoge}
\end{equation}
The probability above is univariate, hence it can be easily evaluated by CDF:
\[
\Pr \left[h_{ij} x_k \ge g_{ij} \right] = F\left( \frac{h_{ij}\bar{x}_k - g_{ij}}{h_{ij}\Sigma_{x_k}h_{ij}^T} \right),
\] 
where $\bar{x}_k$ is the mean of $x_k$ and $F(\cdot)$ is the CDF of the standard normal distribution. The covariance matrix of $x_k$ is computed recursively by $\Sigma_{x_{k+1}} = A \Sigma_{x_k} A^T + \Sigma_w$.

We apply a piecewise linear approximation of the CDF. 
Since the CDF of the standard normal distribution $F(y)$ is convex at $y \le 0$, the piecewise linear approximation can be done \textit{without} introducing integer variables.
An underlying assumption is that the mean state $\bar{x}_k$ is always outside of obstacles, hence $h_{ij}\bar{x}_k - g_{ij} < 0.$ 
This assumption is implied by $V < 0.5$ because if the mean state is on a constraint boundary, the probability of violating the constraint is 0.5. In practical cases the users usually do not allow 50 \% of risk. 
Let $a_l y + b_l$ be the linear approximation of $F(y)$ at $y_l \le 0$, $l = 1 \cdots L$. 
The right hand side of (\ref{eq:hoge}) is approximated as follows:
\[
\min \delta \quad {\rm s.t.} \ \bigvee_{j=1}^{R_i} \bigwedge_{l=1}^L a_l\left( \frac{h_{ij}\bar{x}_k - g_{ij}}{h_{ij}\Sigma_{x_k}h_{ij}^T} \right) + b_l \le \delta.
\]

Finally, the disjunction is replaced by mixed-integer constraints using a standard trick called the big-M method \cite{BertsimasLP}. Letting $\mathcal{M}$ be a very large positive constant, the optimization problem formulated in the previous subsection is now converted to MILP as follows:

\begin{align*}
\min & \ \sum_{k=1}^N \sum_{i=1}^m v_{ki} + \lambda \left( \sum_{i=1}^M \sum_{k=1}^N \delta_{ik} - V \right) \\
{\rm s.t.} & \ v_{kd} \ge u_{kd}, \ v_{kd} \ge -u_{kd}, P u_k \preceq q\\
& \  \bar{x}_{k+1} = A \bar{x}_k + Bu_k + w_k\\
& \ a_l \left( \frac{h_{ij} \bar{x}_k - g_{ij}}{h_{ij}\Sigma_{x_k}h_{ij}^T} \right) + b_l \le \delta_{ik}+ \mathcal{M}z_{ij} \\
& \ \sum_{j = 1}^{R_i} z_{ij} \le {R_i}-1, \ \ z_{ij} \in \{0, 1\} \\ 
& \ k = 1 \cdots N, d = 1 \cdots m, \ i = 1 \cdots M,  \\ 
& \ j = 1 \cdots R_i, \ l = 1 \cdots L.
\end{align*}

\subsection{Simulation Results}

We performed simulations on a double integrator plant:
\begin{align*}
\small
&A = \left[\begin{array}{cccc}
1 & 0 & \Delta T & 0       \\ 
0 & 1 & 0        & \Delta T \\ 
0 & 0 & 1        & 0       \\ 
0 & 0 & 0        & 0
\end{array}\right], \
B = \left[\begin{array}{cc}
\frac{1}{2}\Delta T^2 	& 0       \\ 
0        				& \frac{1}{2}\Delta T^2\\ 
\Delta T    			& 0       \\ 
0        				& \Delta T
\end{array}\right], \\
&\Sigma_w = \left[\begin{array}{cccc}
\sigma_w^2	& 0 			& 0 	& 0       \\ 
0 			& \sigma_w^2 	& 0     & 0 \\ 
0 			& 0 			& 0		& 0       \\ 
0 			& 0 			& 0     & 0
\end{array}\right], \quad \Delta T = 1, \ \sigma_w = 0.1.
\end{align*}

We first considered an illustrative example shown in Figure \ref{fig:sim}, where two obstacles were placed between which there was a narrow shortcut passage. Initial state was at $[ 0, 0, 0, 0]^T$ and the mean final state was constrained at $[ 10, 10, 0, 0]^T$. Horizon length was $N=15$, and finally the risk bound is $V = 0.01$. The simulation was performed on a machine with Intel Core i7-3612QM CUP clocked at 2.10 GHz and 8.00 GB RAM. The algorithm was implemented in MATLAB using YALMIP \cite{YALMIP} and MILP was solved by CPLEX. The bisection method was used for root finding. For comparison, the optimal pure strategy was also computed by using the same MILP approximation presented in Section \ref{sec:MILP}.

Figure \ref{fig:sim} shows the mixed and pure strategy solutions computed by the proposed algorithm. The mixed strategy consisted of the two control sequences shown in blue lines. The lower dual solution $\lambda_L$ corresponds to a risk-taking path that goes through the narrow passage, while the upper dual solution $\lambda_U$ results in a risk-averse path that go around the obstacles. \MEC{ Adding a table for the bisection method showing the changes (and convergence) to the two values of $\lambda$ can be useful.} The former took $r_L = 0.0278$ of risk and $c_L = 3.692$ of cost, while the latter took $r_U = 0.0021$ of risk and $c_U = 4.175$ of cost.
The mixed strategy chose between them with the probabilities of $0.306$ and $0.694$, resulting in the risk of exactly 0.01 and the expected cost of 4.027. On the other hand, the pure optimal strategy took the risk of exactly 0.01 and the cost of 4.175. Therefore this example validates our claim that mixed strategy can result in less expected cost than pure strategy in a nonconvex SMPC.
\begin{figure}[tb]
  \begin{center}
    \includegraphics[width=\columnwidth]{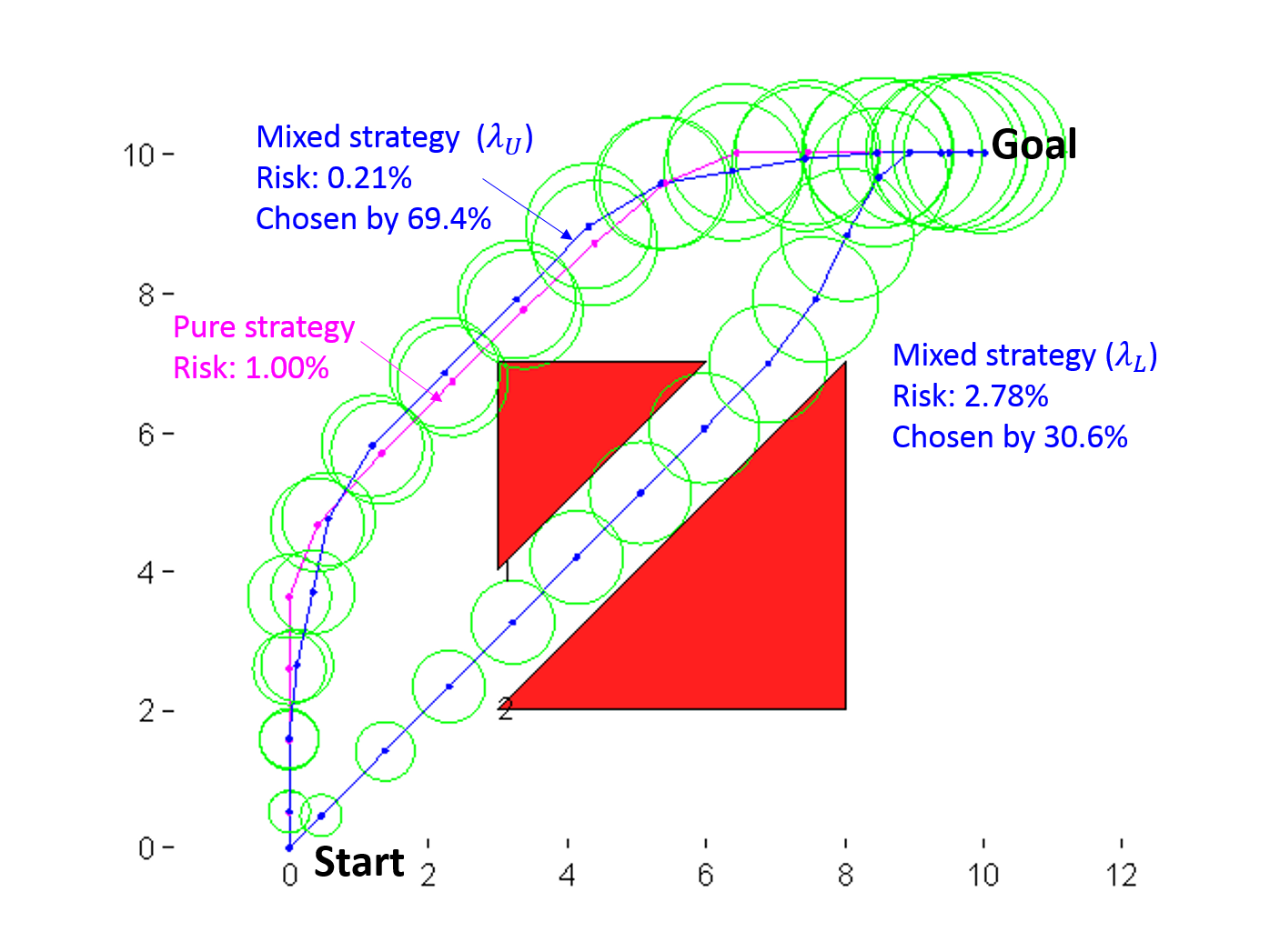}
  \end{center}
  \caption{Simulation result on an illustrative example. The optimal mixed strategy chooses between the two blue control sequences. The expected cost of the mixed control strategy is 4.027 while that of the optimal pure control strategy is 4.105. $c_L = 3.692, c_U=4.175$. The three-sigma ellipses are shown in green.} 
  \label{fig:sim}
\end{figure}

We next performed a Monte Carlo simulation in order to empirically validate our claim that the optimal solution to the mixed strategy problem is always as good as the optimal solution to the original (pure) problem. 
We randomly placed four square obstacles in a 2-D state space. The center of each square was sampled from a uniform distribution within $[-3, 3] \times [-3, 3]$. The size of each square was sampled from a uniform distribution in $[1, 3]$. The initial state was  $[ -5, 0, 0, 0]^T$, and the mean final state was constrained at $[ 5, 0, 0, 0]^T$. 

Figure \ref{fig:MC} shows the resulting cost of the optimal mixed and pure solutions to 200 randomized problems.
The average computation time was 59.8 sec.
There were 163 samples on the $45^\circ$ line in the plot, meaning that the cost of optimal mixed and pure solutions were identical in those samples. 
There were 37 samples below the $45^\circ$ line, meaning that the cost of optimal mixed solution was strictly less than the cost of the optimal pure solution. 
There was no sample above the $45^\circ$ line.
This result supports our claim that the optimal solution to the mixed strategy problem is always as good as the optimal solution to the original problem.
At least in this particular problem domain, mixed strategy outperforms pure strategy not very frequently, and as is seen in Figure \ref{fig:MC} the improvement is often marginal. 
It is certainly possible to engineer a problem that better highlights the advantage of mixed strategy, but that does not serve the objective of this paper.
The most important contributions of this paper are the theoretical finding that mixed strategy can outperform pure strategy in nonconvex SMPCs, as well as the algorithm to compute the optimal mixed strategy solutions.
The empirical results validate the theoretical finding and the algorithm.

\begin{figure}[tb]
  \begin{center}
    \includegraphics[width=\columnwidth]{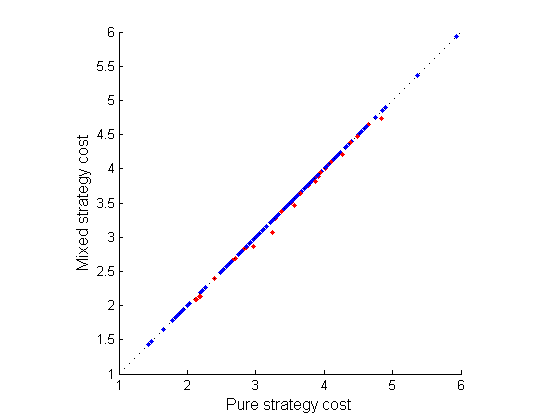}
  \end{center}
  \caption{Cost of the optimal pure and mixed solutions to 200 problems with randomly generated obstacles. Red dots are the samples where the cost of the optimal mixed solution is strictly less than the optimal pure solution. Blue dots are the samples where two costs are identical. No optimal pure solutions outperform the optimal mixed solutions. } 
  \label{fig:MC}
\end{figure}

\section{Deployment on Chance-constrained MDP}
The application of the proposed approach is not limited to SMPC.
In this section we present applications to finite-state MDPs with a chance constraint.

We consider a finite time steps, $k = 1 \cdots T$.
The state space and action space are finite and time-varying, denoted by $\mathcal{X}_k$ and $\mathcal{U}_k$. 
State and control sequence variables are represented as $x_k \in \mathcal{X}_k$ and $u_k \in \mathcal{U}_k$.
A control policy is a map $\mu_k: \mathcal{X}_k \rightarrow \mathcal{U}_k$.
The sequence of control policy is denoted by $\mu := [\mu_1, \cdots, \mu_T]$.
A mixed strategy finds multiple control policy sequences, $\mu^1, \mu^2, \cdots \mu^{K+1}$, and randomly choose one.
The control objective is to minimize the expected total cost, $\mathbb{E}\left[ \sum_{k=1}^T f_0(x_k, u_k) \right]$.
A set of failure states, $\mathcal{X}_k^F \subset \mathcal{X}_k$, is specified for each time step.
A joint chance constraint limits the probability that one of the failure states is visited at any time step:
\[
\Pr\left[ \bigvee_{k=1}^{T} x_k \in \mathcal{X}_k^F \right] \le V.
\]

Since $K=1$, the mixed-strategy problem can be solved by root finding, as in Section \ref{sec:root_finding}.
The inner optimization problem is solved through the chance-constrained dynamic programming\cite{Hiro_AURO2015}
\footnote{\todo{more precisely, CCDP is also solved by dual. the inner optimizaton of CCDP}}.
In the reminder of this section we present two applications: path planning and Mars Entry, Descent, and Landing (EDL).

\subsection{Application to Path Planning}
In this application, we assume $T=50$ and a two-dimensional state space in $[0, 100]^2$ discretized into a 100x100 grid.
Obstacles are placed as shown in Figure \ref{fig:path}.
A single integrator dynamics is assumed:
\begin{align*}
& x_{k+1} = x_{k} + u_{k} + w_{k} \\
& \| u_k \|_2 \le d_k, \quad w_k \sim \mathcal{N}(0, \sigma^2 I),
\end{align*}
where $u_{k}$ is a two dimensional vector specifying the increment in position, $w_{k}$ is a discritized, Gaussian-distributed noise, $d_k$ and $\sigma$ are constant parameters, $\mathcal{N}(0, \Sigma)$ is a zero-mean Gaussian distribution with the covariance matrix $\Sigma$, and $I$ is the two-dimensional identity matrix.
We set $d_k = 6$ and $\sigma = 1$.
The cost function is the expected length of the resulting path that connects the start and goal states. 
The risk bound is $V = 0.02$.

The optimal solution to MSOC consists of two control policy sequences, $\mu^1$ and $\mu^2$, which have expected path lengths of $130.8$ and $98.7$ while the risks of hitting obstacles being $0.64\%$ and $2.28\%$, respectively.
The nominal paths resulting from $\mu^1$ and $\mu^2$ (i.e, state sequence assuming when $w_{k} = 0$) are shown in Figure \ref{fig:path}. 
The two pure control strategies are chosen with probabilities of $17.0\%$ and $83.0\%$, respectively.
As a result, the mixed control strategy has a expected path length of $104.2$ while the risk of hitting obstacles is exactly $2.0\%$.
On the other hand, solving PSOC results in the same pure control strategy  as $\mu^1$, whose expected path length is $130.8$.
As expected, mixed strategy resulted in a less expected cost while respecting the stochastic constraint.

The solution time of MSOC was $20.52$ seconds while that of the PSOC was $20.38$ seconds\footnote{Simulations are conducted on a machine with the Intel(R) Xenon(R) X5690 CPU clocked at 3.47GHz and 96GB of RAM}.
The difference in computation time is small because 
solving PSOC also requires iterative dual optimization in this case\cite{Hiro_AURO2015}.

\begin{figure}[tb]
  \begin{center}
    \includegraphics[scale=0.5]{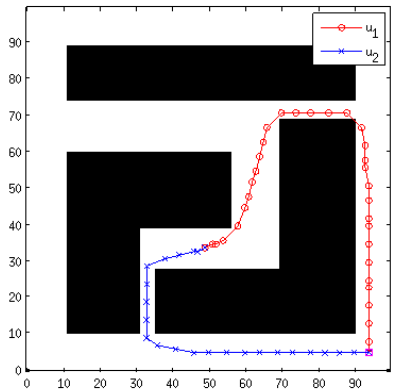}
  \end{center}
  \caption{Application of mixed-strategy stochastic control to a path planning problem. The optimal mixed control strategy chooses between the two paths, $u_1$, $u_2$, with probabilities of $17.0\%$ and $83.0\%$, respectively.} 
  \label{fig:path}
\end{figure}

\subsection{Application to Mars Entry, Descent, and Landing}
We next present an application to the planning of entry, descent, and landing (EDL) for future Mars missions\cite{Hiro_AURO2015}.
Mars EDL is subject to various source of uncertainties such as atmospheric
variability and imperfect aerodynamics model.
The resulting dispersions of the landing position typically spans over
tens of kilometers for a 99.9\% confidence
ellipse~\cite{MarsLandingAnalysis2004}. 
Given such a highly uncertain
nature of EDL, a target landing site must be carefully chosen in order
to limit the risk of landing on rocky or uneven terrain.
At the same time, it is equally important to land near science targets in order to minimize
the traverse distance after the landing. 

Future Mars lander/rover missions would aim to reduce the uncertainty by using several new active control technologies, consisting of the
following three stages: entry-phase targeting, powered-descent guidance
(PDG)~\cite{Acikmese07JGCD}, and hazard detection and avoidance
(HDA)~\cite{Johnson08AC_HDA}. 
Each control stage is capable of making
corrections to the predicted landing position by a certain distance,
but each stage is subject to execution errors, which deviates the
spacecraft away from the planned landing position. 

We pose this problem as an optimal sequential decision making under a persisting
uncertainty.
At the $k$th control stage, $x_k$ represents the projected landing location without further control.
By applying a control at the $k$th stage, the lander can correct the projected landing location to $u_k$, which must be within an ellipsoid centered around $x_k$. At the end of the $k$th control stage, the projected landing location $x_{k+1}$ deviates from $u_k$ due to a disturbance $w_k$, which is assumed to have a Gaussian distribution.
This EDL model is described as follows:
\begin{align*}
& x_{k+1} = u_{k} + w_{k} \\
& (u_k - x_k)^T D_k (u_k - x_k) \le d_k^2, \quad w_k \sim \mathcal{N}(0, \Sigma_k),
\end{align*}
where $D_k$ and $\Sigma_k$ are positive definite matrices, and $d_k$ is a scalar constant.
We use the same parameter settings as \cite{Hiro_CDC12_DP}.

We consider three control stages, i.e., $T=3$ and $x_3$ is the final landing location.
The state space $\mathcal{X}$ is a 2 km-by-2 km square, which is discretized at a one meter resolution.
As a result, the problem has four million states at each time step.
The control and the disturbance are also discretized at the same resolution.
The cost function is the expected distance to drive on surface to visit two science targets, shown in magenta squares in  Figure \ref{fig:result}, starting from the landing location.
The infeasible areas are specified using the data of HiRISE (High Resolution Imaging Science Experiment) camera on the Mars Reconnaissance Orbiter.
We use the real landscape of a site named ``East Margaritifer" on Mars.

Figure \ref{fig:result} show the simulation result with a risk bound $V = 0.1\%$.
The optimal solution to MSOC chooses between two control policy sequences, $\mu^1$ and $\mu^1$, with the probabilities of $84.9\%$ and $15.1\%$.
The probability of failure of the two control policy sequences are $0.016\%$ and $0.574\%$ while their costs being $645.49$ and $641.02$.
The resulting probability of failure of the mixed strategy is exactly $0.1\%$
The optimal solution to PSOC is equivalent to $\mu^1$.
Again, as expected, mixed strategy reduces the expected cost while respecting the stochastic constraint.

Note that the optimal pure control policy takes significantly less risk than the risk bound.
This is because there is no other solution that is within the risk bound and has less cost.
The mixed control strategy improves the cost by mixing this optimal pure control strategy with another control policy that has an excessive risk but a less cost.

\begin{figure}[tb]
\centering 
    \subfigure[Optimal pure control strategy]{\includegraphics[scale=.45]{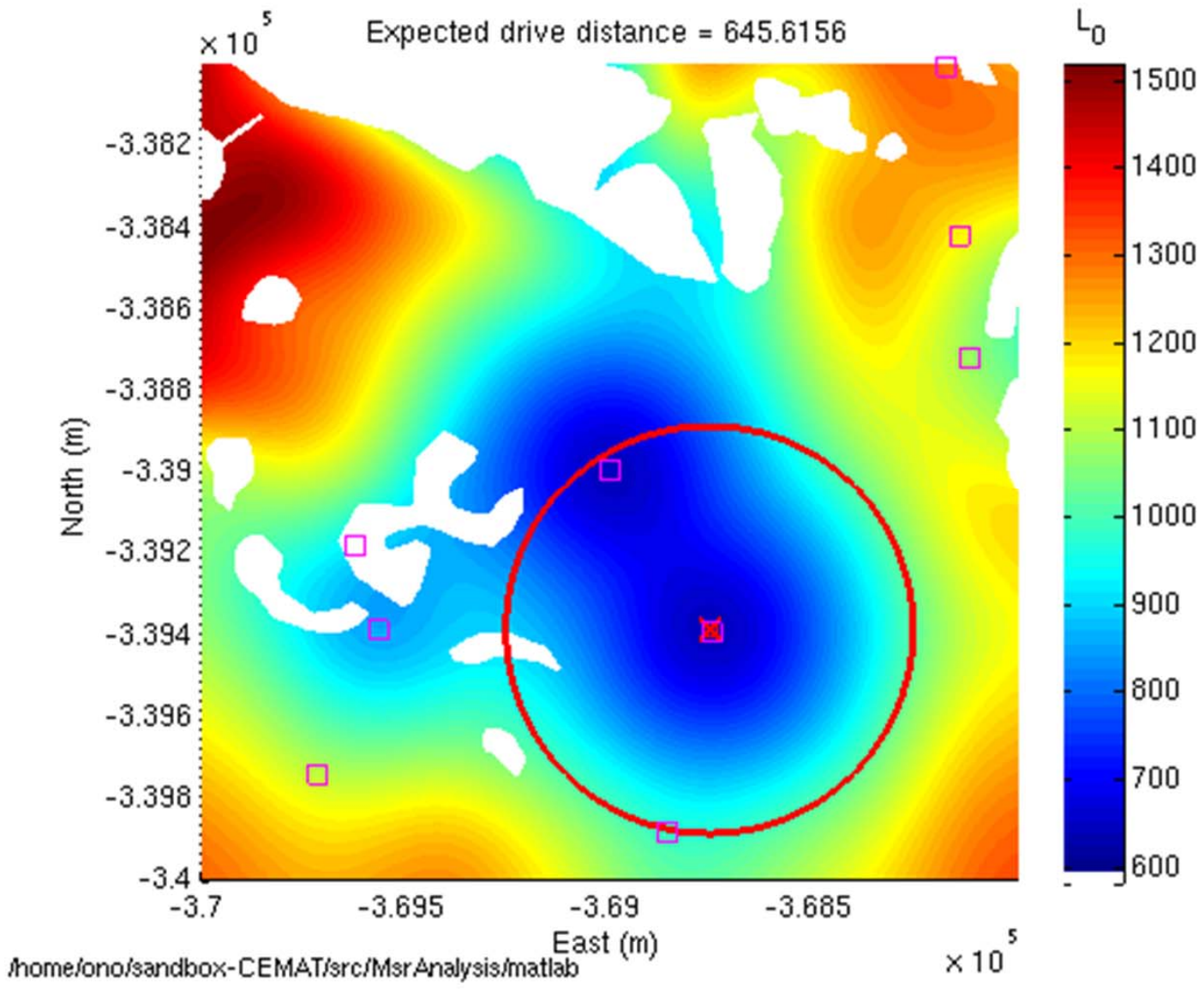}
   \label{fig:mars_1}}
    \subfigure[Optimal mixed control strategy]{\includegraphics[scale=.45]{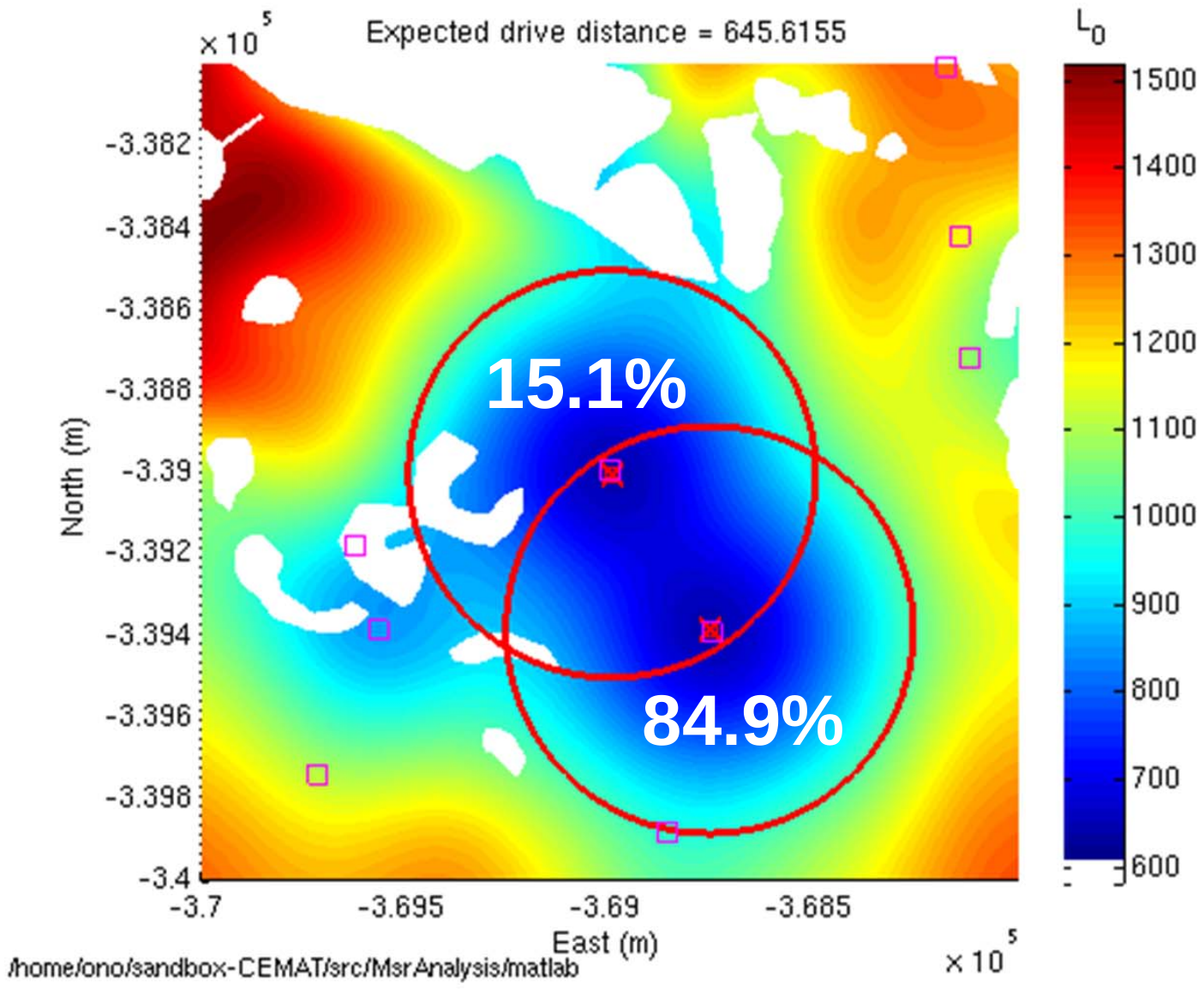}
   \label{fig:mars_2}}
  \caption{(a) The optimal pure control strategy and (b) the optimal mixed control strategy for the Mars EDL scenario with a risk bound $V = 0.01$. The red $\times$-marks are the optimal EDL target of the pure control strategies, while the red circles represents $3\sigma$ of the disturbance in the first stage, $w_o$. The mixed strategy chooses between the two pure control strategies with the probabilities of $15.1\%$ and $84.9\%$.}
  \label{fig:result}
\end{figure}

\todo{Move this discussion to intro?}
It may sound unrealistic to decide a landing site probabilistically.
However, consider a situation where there are 1,000 vehicles and we require 999 of them to land successfully while minimizing the total cost.
Then our result means that the optimal strategy is to send 849 of them to the first landing site and 151 of them to the other.
When having only one vehicle, the interpretation of this result varies with viewpoint.
For a person who knows the result of the coin flip in advance of the landing, the resulting action is no more mixed and hence it may violate the given chance constraint.
However, if the result of the coin flip is hidden from the observer, like Schr\"{o}dinger's cat in a box, then this mixed strategy results in the minimum expected cost while the probability of failure is still within the specified bound.

\section*{Conclusions}
We found that, in nonconvex SMPC, choosing control inputs randomly can result in a less expected cost than deterministically optimizing them.
We developed a solution method based on dual optimization and deployed it on a linear nonconvex SMPC problem, which was efficiently solved using an MILP approximation.
Finally, we validated our theoretical findings through simulations.

\section*{Acknowledgment}

The research described in this paper was in part carried out at the
Jet Propulsion Laboratory, California Institute of Technology,
under a contract with the National Aeronautics and Space
Administration. 
This research was supported by the Office of Naval Research, Science of Autonomy Program, under Contracts N00014-15-IP-00052 and N00014-15-1-2673.

\ifCLASSOPTIONcaptionsoff
  \newpage
\fi

\bibliographystyle{IEEEtran}
\bibliography{arxiv}

%


\end{document}